\begin{document}

\title{Spread them Apart: Towards Robust Watermarking of Generated Content} 

\titlerunning{Spread them Apart}

\author{Mikhail Pautov \inst{1}  
\and
Danil Ivanov \inst{2} \and 
Andrey V. Galichin \inst{3,4} \and
Oleg Rogov \inst{4} \and
Ivan Oseledets \inst{5}}

\authorrunning{Mikhail Pautov et al.}

\institute{AXXX
\and
VeinCV 
\and
Applied AI Institute 
\and 
MTUCI 
\and 
Institute of Numerical Mathematics\\}

\maketitle

\begin{abstract}
  Generative models that can produce realistic images have improved significantly in recent years. The quality of the generated content has increased drastically, so sometimes it is very difficult to distinguish between the real images and the generated ones. Such an improvement comes at a price of ethical concerns about the usage of the generative models: the users of generative models can improperly claim ownership of the generated content protected by a license.  In this paper, we propose an approach to embed watermarks into the generated content to allow future detection of the generated content and identification of the user who generated it. The watermark is embedded during the inference of the model, so the proposed approach does not require the retraining of the latter. We prove that watermarks embedded are guaranteed to be robust against additive perturbations of a bounded magnitude. We apply our method to watermark diffusion models and show that it matches state-of-the-art watermarking schemes in terms of robustness to different types of synthetic watermark removal attacks.
  \keywords{Watermarking \and Diffusion models \and Generated content}
\end{abstract}

\section{Introduction}
\label{sec:intro}
Recent advances in generative models have brought the performance of image synthesis tasks to a whole new level. For example, the quality of the images generated by diffusion models \cite{croitoru2023diffusion,rombach2022high,esser2024scaling} is now comparable to that of human-drawn pictures or photographs. Compared to generative adversarial networks  \cite{goodfellow2014generative,brock2019large},  diffusion models allow the generation of high-resolution, diverse, and naturally looking pictures. More than that, the generation process with diffusion models is more stable, controllable, and explainable. They are easy to use and are widely deployed as tools for data generation, image editing \cite{kawar2023imagic,yang2023paint}, music generation \cite{schneider2024mousai}, text-to-image synthesis \cite{saharia2022photorealistic,zhang2023adding,ruiz2023dreambooth}, and in other multimodal settings. 

Unfortunately, several ethical and legal issues may arise from the usage of diffusion models.  On the one hand, since diffusion models can be used to generate fake content, for example, deepfakes \cite{zhao2021multi,narayan2023df}, it is crucial to develop automatic tools to verify that a particular digital object is artificially generated. On the other hand, a dishonest user of the model protected by a copyright license can query it, receive the result of generation, and later claim exclusive copyright, possibly disobeying the copyright agreement. In this work, we focus on the detection of the images generated by a particular diffusion model and on the identification of the end-user who queried the model to generate an image. We develop a technique to embed the digital watermarks into the generated content during the inference of the generative model. Namely, during generation, we optimize the latent representation of the image so that the output image satisfies the set of predefined end-user-specific inequalities. These inequalities are later used to reliably identify the end-user, so the method solves both content detection and user attribution tasks. 
The detailed description of the approach is presented in the upcoming sections.  

\begin{figure*}[tb]
    \centering
    \includegraphics[width=\textwidth]{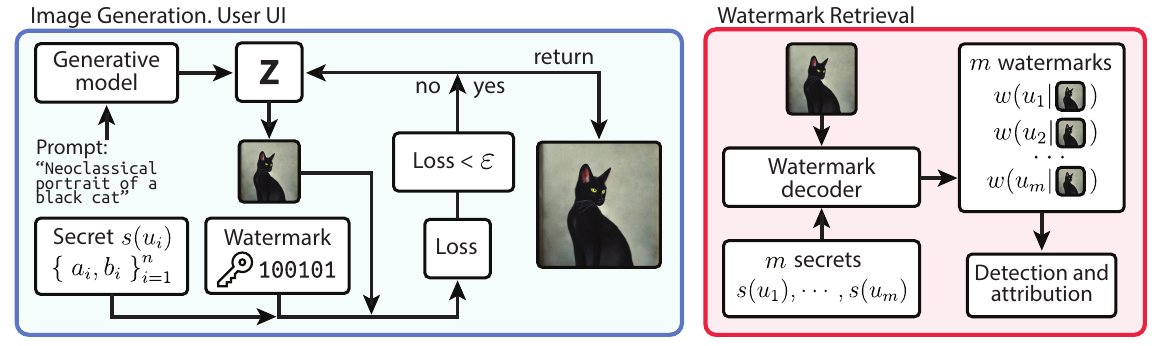}
    \caption{Illustration of the proposed method. During the image generation phase, the user $u_i$ queries the model with the prompt. Given the prompt, the model produces the latent $z$, from which the image is generated. If the image generated satisfies the constraint $\mathcal{L}_{wm} < \varepsilon$ (meaning the watermark is successfully embedded), it is yielded to the user; otherwise, the loss function from Eq. \ref{eq:TotalLoss} is minimized with respect to the latent $z$. Note that the value of $\varepsilon$ may vary from image to image. During the watermark retrieval phase, given the image $x$ and $m$ secrets, $s(u_1), \dots, s(u_m)$, the watermark decoder extracts m watermarks, $w(u_1|x), \dots, w(u_m|x)$. Then, the image is attributed to the user $u$ according to the Eq. \ref{eq:attribution}. }
    \label{fig:big_teaser}
\end{figure*}

Our contributions  are threefold: 

\begin{itemize}
    \item We propose \emph{Spread them Apart}, the framework to embed digital watermarks into the generative content of continuous nature. Our method embeds the watermark during the process of content generation and, hence, does not require additional training of the generative model. 
    \item We apply the framework to watermark images generated by a diffusion model and prove that the watermarks embedded are guaranteed to be robust to the additive perturbations of a bounded magnitude, multiplicative perturbations, and exponentiations.  

    \item Experimentally, we show that our approach yields watermarks that are robust to different types of post-processing of the images aimed at watermark removal, such as brightness and contrast adjustment, gamma correction, or a white-box adversarial attack. 
\end{itemize}

\section{Related Work}

\subsection{Diffusion Models}

Inspired by non-equilibrium statistical physics, \cite{sohldickstein2015deepunsupervisedlearningusing} introduced the diffusion model to fit complex probability distributions. Later, \cite{ho2020denoisingdiffusionprobabilisticmodels} introduced a new class of models called Denoising Diffusion Probabilistic Models (DDPM) by establishing a novel connection between the diffusion model and the denoising score matching. Shortly after, the Latent Diffusion Model (LDM)  was developed to improve efficiency and reduce computational complexity, with the diffusion process conducted in a latent space   \cite{rombach2022high}. During training, the LDM uses an encoder $\mathcal{E}$ to map an input image $x$ to the latent space, $z = \mathcal{E}(x)$. For the reverse operation, a decoder $\mathcal{D}$ is employed, so that $\hat{x} = \mathcal{D}(z)$. During inference, the LDM initializes a noise vector $z \sim \mathcal{N}(0, I)$ in the latent space and iteratively denoises it. The decoder then maps the final latent representation back to the image space.

\subsection{Watermarking of Digital Content}
Watermarking has been recently adopted to protect the intellectual property of neural networks \cite{wu2020watermarking,pautov2024probabilistically} and generated content \cite{kirchenbauer2023watermark,zhaoprovable,fu2024watermarking}. In a nutshell, watermarking of generated content is done by injecting digital information within the generated content, allowing the subsequent extraction.  Existing methods of digital content watermarking can be divided into two categories: content-level watermarking and model-level watermarking. The methods of content-level watermarking operate within a particular representation of content, for example, in the frequency domain of the image signal \cite{o1996watermarking,cox1996secure}. Notably, when the image is manipulated in the frequency domain, the watermark embedding process can be adapted to produce watermarks that are robust to geometrical image transformations, such as rotations and translations \cite{wen2024tree}. In contrast, model-level watermarking approaches are designed to embed information during the generation process. In end-to-end methods, the models to embed and extract watermark are trained jointly \cite{zhu2018hidden,hayes2017generating}. In \cite{yu2021artificial}, it was proposed to teach the watermark encoder on the training data of the generative model; such an approach yields a watermarking scheme that is conditioned on the generative model and its training dataset. This method was later adapted to latent diffusion models \cite{fernandez2023stable} and unconditional diffusion models \cite{zhao2023recipe}. In contrast, there are methods that do not require additional model training. These methods are designed to alter the output distribution of the generative model to embed previously learned watermark into the model or the content itself \cite{kirchenbauer2023watermark,wen2024tree}.

\subsection{Robustness to Watermark Removal Attacks}
Watermark removal attacks are aimed at removing the watermark embedded into the model's weights or generated content. In the prior works on removing the watermarks from generated images \cite{li2019towards,cao2019generative}, the attack problem is formulated in terms of the image-to-image translation task, and methods to remove watermarks via an auxiliary generative adversarial network are presented. Other approaches \cite{hertz2019blind,liang2021visible,sun2023denet} perform watermark removal in two steps: firstly, the visual watermark is localized within an image; secondly, it is removed via a multi-task learning framework. In practice, a watermarking scheme has to be robust to destructive and constructive attacks, or synthetic transformations of the data. Destructive transformations,  such as brightness and contrast adjustment, geometric transformations, such as rotations and translations, compression methods, and additive noise, are aimed at watermark removal by applying a transformation. In contrast, constructive attacks treat watermarks as noise and are aimed at the restoration of original content \cite{zhang2024robust}. It is usually done by applying purification techniques, such as Gaussian blur \cite{hosam2019attacking} or image inpainting \cite{liu2021wdnet,xu2017automatic}.

\section{Problem Statement}
In this section, we formulate the problem statement and the research objectives. Note that we focus on the watermarking of images generated by diffusion models, but the formulation below is valid for watermarking of any generated content, for example, audio, video, or text.  

\subsection{Image Watermarking}
In our approach, we focus on  \emph{detection} and \emph{attribution} of the generated image simultaneously: while detection is aimed at verifying whether a particular image is generated by a given model,  attribution is aimed at determining the user who generated the image. 

Suppose that we are given the generative model $f$ deployed in the black-box setting, i.e., as a service: in the generation phase, a user $u_i \in [u_1, \dots, u_m]$ sends a query to the model and receives a generated image $x \in \mathbb{R}^d$. If $x$ is a watermarked image, the owner of model $f$ should be able to identify that $x$ is generated by user $u_i$ by querying the model $f$. In our method, the image is watermarked during the \emph{generation} phase, not during the post-processing. We formulate the process of watermarking and attribution in the following way: 
\begin{enumerate}
    \item When the user $u_i \in [u_1, \dots, u_m]$ registers in the service, it is assigned a pair of \emph{public} and \emph{private} keys, namely, the watermark $w(u_i)$ and the secret $s(u_i)$. Watermark is a binary string of length $n$ and the secret is the sequence of tuples of length $n$, where each tuple is a pair of unique positive numbers treated as indices: $w(u_i) \in \{0,1\}^n, \ s(u_i) \in \mathbb{Z}_{+}^{2n}.$ 
    \item When the user $u_i$ queries the model $f,$ it generates the image $x$ with the watermark $w(u_i)$ embedded into $x$.
    \item  When the watermarked object $x$ is received by the model owner, it extracts the watermark $w(u_i|x)$ using the secret $s(u_i)$ of the user $u_i$  and compares it with the watermark $w(u_i)$ assigned to the user $u_i$. To do so, we  compute the bitwise distance $d(w(u_i|x), w(u_i))$ between $w(u_i|x)$ and $w(u_i)$:
    \begin{equation}
        d(w(u_i|x), w(u_i)) = \sum_{j=1}^n \mathds{1} (w(u_i|x)_j \ne w(u_i)_j).
    \end{equation}
\end{enumerate}
    \begin{remark}
    For robustness to watermark removal attack, in case of a single user $u_i$, we flag the object $x$ as generated by the user $u_i$ if the distance $d(w(u_i|x), w(u_i))$ is either small or large, namely, if 
    \begin{equation}
    \label{eq:att_rule}
        d(w(u_i|x), w(u_i)) \in [0, \tau_1] \cup [\tau_2, n], 
    \end{equation}
    where $\tau_1 \ll n$ and $\tau_2 \gg 0.$
    This procedure is known as the double-tail detection \cite{jiang2023evading} and is effective against watermark flipping attacks. 
    \end{remark}

\subsection{The Probability of Incorrect Attribution}
We assume that the watermark $w(u_i)$ attributed to the user $u_i$ is drawn randomly and uniformly from the set of all possible $n-$bit watermarks, $\{0,1\}^n$. We formulate the detection problem as a hypothesis test. In case of a single user $u_i$, we define the null hypothesis $\mathcal{H}_0 =$ ``the object $x$ is generated not by $u_i$'' and the  alternative hypothesis $\mathcal{H}_1 =$ ``the object $x$ is generated  by $u_i$''. Additionally, under the null hypothesis, we assume that the $j'$th bit in the watermark $w(u_i|x)$ extracted from $x$ is the same as the $j'$th bit from $w(u_i)$ with the probability $p_i$. 

In the case of a single user $u_i$ and given the attribution rule from the Eq. \ref{eq:att_rule}, we compute the probability of the false attribution, namely, 
\begin{align}
\label{eq:fpr_1}
    & FRP(1)|_{u_i} = \mathbb{P} \left[d(w', w(u_i)) \in [0, \tau_1] \cup [\tau_2, n] \right] = \nonumber \\ &\sum_{q\in [0, \tau_1] \cup [\tau_2, n]}{n \choose q} p_i^q (1-p_i)^{n-q},
\end{align}
where $w' = w(u_i|x)$ is a random watermark uniformly sampled from $\{0,1\}^n$, namely, ${w' \sim \{0,1\}^n, w' \ne w(u_i)}.$

In case of $m$ users,  the probability $FPR(m)$ of incorrect attribution of the non-watermarked image $x$ to some other user $u_j \in [u_1, \dots, u_m]$  is upper bounded by the probability below:
\begin{equation}
\begin{aligned}
    \label{eq:fpr_m}
    FPR(m) & \le  \ \mathbb{P}_{w' \sim \{0,1\}^n} [ \exists u_j \in [u_1,  \dots, u_m]:  d(w', w(u_j)) \in [0,\tau_1] \cup [\tau_2, n]] \le \\ & \le \sum_{u_j \in [u_1,\dots, u_m]} FPR(1)|_{u_j} = \hat{p}.
\end{aligned}
\end{equation}
Note that this upper bound holds regardless of the independence of random variables $\xi_1, \dots, \xi_m,$ where 
\begin{equation}
\label{eq:indicator}
    \xi_i = \mathds{1}[d(w(u_i|x), w(u_i)) \in [0,\tau_1] \cup [\tau_2, n]].
\end{equation}

\begin{remark}
In our experiments, the probabilities $p_i$ from above are estimated to be close to $\frac{1}{2}.$
\end{remark}

\subsection{Robustness to Watermark Removal Attacks}
When the user $u_i$ receives the watermarked image $x$, they can post-process it to obtain the other image, $x'$, which does not retain a sufficient part of the watermark $w(u_i)$. The transition from $x$ to $x'$ may be done by applying an image transformation, such as brightness or contrast adjustment, Gaussian blur, or additive noise. The other approach is to perform an adversarial attack on the generative model to erase the watermark \cite{jiang2024watermark}. 

\section{Spread them Apart: Pixel level}
\label{sec:method}

In this section, we provide a detailed description of the proposed approach, its implementation details, and the robustness guarantee against additive removal attacks of bounded magnitude. Note that in this section, we describe the embedding of the watermarks on the pixel level; an extension of the method is discussed in the subsequent sections. 
\subsection{Embedding and Extraction of the Watermark}
Suppose that  $f$ is the generative model. Recall that the user $u_i \in [u_1, \dots, u_m]$ is assigned a pair $(w(u_i), s(u_i))$ after the registration in the service, where both the watermark and the secret are unknown to the user and are privately kept by the owner of $f$. Let $x$ be the generated image. Then, the watermark injection and reconstruction are done as follows:

\begin{enumerate}
    \item The secret $s(u_i)$ is interpreted as two sequences of indices, $A = \{a_1, \dots, a_n\}$ and $B = \{b_1, \dots, b_n\}$. The watermark $w(u_i) = \{w_1, \dots, w_n\}$ is the binary string that restricts the generated image $x$ in the areas represented by the sets $A$ and $B$.
    \item The restriction of $x$ in the areas represented by the sets $A$ and $B$ given $w(u_i)$ is the following implication:
    \begin{equation}
    \label{eq:w_embed}
    \begin{cases}
        w_i = 0 \implies x_{a_i} \ge x_{b_i} \\
        w_i = 1 \implies x_{a_i} < x_{b_i},
    \end{cases}
      \end{equation}
      where $x_{j}$ is the intensity of the $j'$th pixel of $x$.  To increase the robustness  to watermark removal attacks, given $\varepsilon>0$, we apply additional regularization to $x$:
      \begin{equation}
          \min_{j \in [1,\dots,n]} |x_{a_j} - x_{b_j}| \ge \epsilon.
      \end{equation}
\end{enumerate}
To perform detection and attribution of the given image $x$, the owner of the generative model firstly constructs $m$ watermarks $w(u_1|x), \dots, w(u_m|x)$ by reversing the implication from the Eq. \ref{eq:w_embed}. Namely, given the secret $$s(u_i) = \{a_1, \dots, a_n, b_1, \dots, b_n\}$$ of user $u_i$, the watermark bits are restored by the following rule:
\begin{equation}
\begin{cases}
    x_{a_j} \ge x_{b_j} \implies w(u_i|x)_j = 0, \\
     x_{a_j} < x_{b_j} \implies w(u_i|x)_j = 1.     
\end{cases}
\end{equation}

\begin{remark}
Here, we distinguish the watermark $w(u_i)$ assigned by the owner of generative model to the user $u_i$ from the watermark $w(u_i|x)$ extracted from the image $x$ with the use of the secret $s(u_i)$ of user $u_i.$
\end{remark}
When $m$ watermarks $w(u_1|x), \dots, w(u_m|x)$ are extracted, the owner of the model assigns $x$ to the user $u$ with the minimum distance $d(w(u_i), w(u_i|x))$ between assigned and extracted watermarks:

\begin{equation}
\label{eq:attribution}
    u = \arg\min_{u_i \in [u_1, \dots, u_m]: \ \xi_i = 1} d(w(u_i), w(u_i|x)),
\end{equation}
where $\xi_i$ is the indicator function from the Eq. \ref{eq:indicator}. Note that if $\xi_i = 0$ for all $\ i \in [1,\dots,m],$ then $x$ is identified as image not generated by $f$. If the argument in Eq. \ref{eq:attribution} is not unique, the algorithm abstains from attribution.

\subsection{Implementation Details}

In this subsection, we describe the watermarking procedure. In our settings, the generative model $f$ is Stable Diffusion \cite{rombach2022high}. First of all, we have to note that in the Stable Diffusion model, the latent vector $z$ produced by the UNet is then decoded back into the image space using a VAE decoder, $\mathcal{D}$. To embed the watermark into an image, we optimize a special two-component loss function with respect to the latent vector $z.$ The overall loss is written as follows:
\begin{equation}
\label{eq:TotalLoss}
    \mathcal{L} = \lambda_{wm} \mathcal{L}_{wm} + \lambda_{qual} \mathcal{L}_{qual},
\end{equation}
The first term, $\mathcal{L}_{wm}$, defines how the image complies with the pixel difference imposed by the watermark $w(u_i) = \{w_1, \dots, w_n\}$ and the secret $s(u_i) = \{a_1, \dots, a_n, b_1, \dots, b_n\}$: 
\begin{equation}
\label{eq:wm_insert}
    \mathcal{L}_{wm} = \sum_{i=1}^{n}\min((-1)^{w_i} (x_{a_i} - x_{b_i}) + \varepsilon, 0), \quad x = \mathcal{D}(z).
\end{equation}
Here, $\varepsilon$ is the watermark robustness threshold. Note that the larger the value of $\varepsilon$ is, the more robust the watermark is to additive perturbations. 
The second term, $\mathcal{L}_{qual}$, is introduced to preserve the quality of the generated image. The value $\mathcal{L}_{qual}$ is a difference in image quality measured by the LPIPS metric \cite{zhang2018perceptual}, which acts as a regularization. 

The optimization is performed over $T= 700$ steps of the Adam optimizer with the learning rate of $8 \times 10^{-3}$, where every $100$ iteration, the learning rate is halved. When the convergence is reached, the ordinary Stable Diffusion post-processing of the image is performed. The coefficients $\lambda_{wm}$ and $\lambda_{qual}$ are determined experimentally and set to be $0.9$ and $150$, respectively, the value of $\varepsilon$ was set to be $\varepsilon=0.2$.  Schematically, the process of watermark embedding and extraction is presented in Figure \ref{fig:big_teaser}. 

\subsection{Robustness Guarantee}
The watermark embedded by our method is robust against additive watermark removal attacks of a bounded magnitude. Namely, let the watermark $w(u_i|x)$ be embedded in $x$ with the use of the secret $s(u_i) = \{a_1, \dots, a_n, b_1, \dots, b_n\}$ of the user $u_i$. Let  $\Delta_ i = \frac{|x_{a_i} - x_{b_i}|}{2}$.
Then, the following lemma holds.
\begin{lemma}\label{lemma:l_inf_lemma}
    Let $\varepsilon \in \mathbb{R}^d$ and $\Delta_{i_1} \le \Delta_{i_2} \le \dots \le \Delta_{i_n}$.     Then, if $\|\varepsilon\|_\infty < \Delta_{i_k}$, then $d(w(u_i|x + \varepsilon), w(u_i|x)) < k.$
\end{lemma}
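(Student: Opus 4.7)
The plan is to reduce the claim to a coordinate-wise sign-flip argument. Each extracted bit $w(u_i|x)_j$ is determined solely by the sign of $x_{a_j} - x_{b_j}$, so bit $j$ can change under the perturbation $\varepsilon$ only if the sign of the difference flips when we pass from $x$ to $x + \varepsilon$. I will therefore bound, for each coordinate, how large $\|\varepsilon\|_\infty$ must be to cause such a flip, and then count the coordinates that survive.

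First, I would write
\begin{equation*}
(x+\varepsilon)_{a_j} - (x+\varepsilon)_{b_j} \;=\; (x_{a_j} - x_{b_j}) + (\varepsilon_{a_j} - \varepsilon_{b_j}),
\end{equation*}
and note that by the triangle inequality $|\varepsilon_{a_j} - \varepsilon_{b_j}| \le 2\|\varepsilon\|_\infty$. Since $|x_{a_j} - x_{b_j}| = 2\Delta_j$, for the sign of the difference to reverse one needs the perturbation term to cancel the original term; a case analysis on the asymmetric extraction rule (bit $0$ when $x_{a_j} \ge x_{b_j}$, bit $1$ when $x_{a_j} < x_{b_j}$) shows that, in the worst case, a flip forces $|\varepsilon_{a_j} - \varepsilon_{b_j}| \ge 2\Delta_j$, and hence $\|\varepsilon\|_\infty \ge \Delta_j$. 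Contrapositively, whenever $\|\varepsilon\|_\infty < \Delta_j$, bit $j$ is preserved.

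Second, I would apply this observation to the sorted gaps. Under the hypothesis $\|\varepsilon\|_\infty < \Delta_{i_k}$, the monotonicity $\Delta_{i_k} \le \Delta_{i_{k+1}} \le \dots \le \Delta_{i_n}$ gives $\|\varepsilon\|_\infty < \Delta_{i_\ell}$ for every $\ell \in \{k, k+1, \dots, n\}$. By the preceding step, each of the $n-k+1$ bits indexed by $i_k, i_{k+1}, \dots, i_n$ is preserved, so any bit flip must occur at one of the $k-1$ indices $i_1, \dots, i_{k-1}$. This yields $d(w(u_i|x+\varepsilon), w(u_i|x)) \le k-1 < k$, as required.

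The only delicate point, and what I expect to be the main obstacle, is the asymmetry of the extraction rule: a flip from bit $1$ to bit $0$ can occur on the boundary (when $(x+\varepsilon)_{a_j} = (x+\varepsilon)_{b_j}$), so the implication ``flip $\Rightarrow \|\varepsilon\|_\infty \ge \Delta_j$'' is only non-strict. Fortunately the lemma's hypothesis $\|\varepsilon\|_\infty < \Delta_{i_k}$ is already strict, so the contrapositive rules out even this boundary case and the counting argument goes through without modification.
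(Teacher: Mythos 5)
Your proposal is correct and follows essentially the same route as the paper's proof: a per-bit argument showing that flipping bit $j$ forces $2\|\varepsilon\|_\infty \ge |\varepsilon_{a_j} - \varepsilon_{b_j}| \ge 2\Delta_j$ via the triangle inequality, followed by counting over the sorted gaps $\Delta_{i_1} \le \dots \le \Delta_{i_n}$. Your explicit handling of the boundary case arising from the asymmetric extraction rule (the non-strict flip from bit $1$ to bit $0$) is a point the paper's write-up glosses over, so your version is, if anything, slightly more careful.
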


The proof is presented in the supplementary material.  This lemma provides a lower bound on the $l_\infty$ norm of the additive perturbation $\varepsilon$ applied to $x$, which is able to erase at least $k$ bits of the watermark $w(u_i|x))$ embedded into $x$.

\section{Spread them Apart: Several Watermarks Instead of One}

Here, we discuss an extension of the proposed method to provide robustness to other types of input perturbations, such as rotations and translations. 
An extension is based on a simultaneous embedding of the watermark in the pixel space and special functions, which are invariant to certain transformations. An intuition behind this extension is as follows. Suppose that the transform $\gamma:\mathbb{R}^d \to \mathbb{R}^d$ is invariant under parametric perturbation $\phi: \mathbb{R}^d \times \Theta \to \mathbb{R}^d$, namely, $\gamma(x) =\gamma(\phi(x,\theta))$ for all $x \in \mathbb{R}^d,\ \theta \in \Theta$. Then, if the watermark is  successfully embedded into $\gamma(x)$, it becomes robust under perturbation $\phi$.

\subsection{Invariants in the Frequency Domain}
We will call image transform $\gamma:\mathbb{R}^d \to \mathbb{R}^d$ \emph{invariant} under parametric perturbation $\phi: \mathbb{R}^d \times \Theta \to \mathbb{R}^d$ at point $x\in \mathbb{R}^d$ if 
\begin{equation}
    \gamma(x) = \gamma(\phi(x, \theta)) \quad \text{for all}\quad  \theta \in \Theta,
\end{equation}
where $\Theta $ is the set of parameters of perturbation $\phi$. In this work, we use two invariants discussed in \cite{lin1993towards}, formulated as theorems below.

\begin{theorem}
    \label{th:trans_inv}
    Let $h(x,y)$ be an integrable nonnegative function, and its Fourier transform 
    \begin{align}
       H(\omega_x, \omega_y) = \int_{\infty}^\infty \int_{-\infty}^\infty h(x,y) e^{-i(x \omega_x +y\omega_y)}dxdy = A(\omega_x, \omega_y)e^{-i \psi(\omega_x, \omega_y)}
    \end{align}
     be twice differentiable. Then the function $A(\omega_x, \omega_y)$ is translation invariant.
\end{theorem}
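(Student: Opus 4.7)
The plan is to invoke the shift property of the Fourier transform and read off invariance of the magnitude directly. First I would let $h_{a,b}(x,y) = h(x-a, y-b)$ denote an arbitrary translation of $h$, parameterized by $(a,b) \in \mathbb{R}^2$. The translation acts on $h$ by shifting its arguments, so the goal reduces to showing that the amplitude part of the Fourier transform of $h_{a,b}$ coincides with $A(\omega_x, \omega_y)$ for every choice of $(a,b)$.

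Next I would compute the Fourier transform of $h_{a,b}$ by a change of variables $u = x - a$, $v = y - b$ in the defining integral. Since the Jacobian is $1$ and the exponential splits as $e^{-i((u+a)\omega_x + (v+b)\omega_y)} = e^{-i(a\omega_x + b\omega_y)} e^{-i(u\omega_x + v\omega_y)}$, the complex exponential $e^{-i(a\omega_x + b\omega_y)}$ factors out of the integral, yielding
\begin{equation}
    H_{a,b}(\omega_x, \omega_y) = e^{-i(a\omega_x + b\omega_y)} H(\omega_x, \omega_y).
\end{equation}
Integrability of $h$ (together with the invariance of Lebesgue measure under translation) justifies the change of variables, so no delicate analytic issues arise at this step.

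Finally I would write $H_{a,b}(\omega_x, \omega_y) = A_{a,b}(\omega_x, \omega_y) e^{-i \psi_{a,b}(\omega_x, \omega_y)}$ in polar form and take absolute values on both sides of the displayed equation. Since $|e^{-i(a\omega_x + b\omega_y)}| = 1$, we obtain
\begin{equation}
    A_{a,b}(\omega_x, \omega_y) = |H_{a,b}(\omega_x, \omega_y)| = |H(\omega_x, \omega_y)| = A(\omega_x, \omega_y),
\end{equation}
which is exactly translation-invariance of the amplitude, with the phase absorbing the shift as $\psi_{a,b} = \psi + a\omega_x + b\omega_y$. The only thing one has to watch is that the twice-differentiability hypothesis on $H$ is used consistently in the statement: it is not needed for the invariance argument per se, but it ensures that $A$ and $\psi$ are well-defined smooth functions (e.g.\ $A$ does not vanish on a set where the polar decomposition becomes ambiguous), so I would either mention this briefly or simply note that the derivation above does not require differentiability beyond what is inherited from $H$. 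There is no substantive obstacle here; the computation essentially is the standard Fourier shift theorem specialized to the amplitude.
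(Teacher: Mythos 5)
Your proof is correct: the change of variables, the factoring out of the unit-modulus phase $e^{-i(a\omega_x + b\omega_y)}$, and the identification $A = |H|$ constitute the standard Fourier shift-theorem argument, and your remark that twice-differentiability plays no role in the invariance itself (serving only to make the polar decomposition $A e^{-i\psi}$ well behaved) is also accurate. Note that the paper itself gives no proof of this statement --- it imports the theorem from the cited reference \cite{lin1993towards} --- so your derivation is exactly the self-contained argument the paper leaves implicit, and there is nothing to reconcile between the two.
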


\begin{theorem}
\label{th:rot_inv}
    Let $\tilde{h}(r,t) = h(e^r \cos t, e^r \sin t)$ be the logarithmic-polar representation of the image $h(x,y)$. The Fourier-Mellin transform of $\tilde{h}(r,t)$ is
    \begin{align}
        \tilde{H}(\omega, k) = \int_{-\infty}^\infty \int_0^{2\pi} \tilde{h}(r,t)e^{-i(kt +\omega r)}dt dr = \tilde{A}(\omega, k)e^{-i\tilde{\psi}(\omega,  k)},
    \end{align}
    where $\tilde{A}(\omega, k)$ is the magnitude and $\tilde{\psi}(\omega, k)$ is the phase. If $\tilde{h}(r,t)$ is an integrable nonnegative function and its Fourier-Mellin transform $\tilde{H}(\omega, k)$ is twice differentiable, then the function $\tilde{A}(\omega, k)$ is invariant under rotation. 
\end{theorem}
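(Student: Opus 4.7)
The plan is to reduce rotation invariance to the translation property of the Fourier transform, in direct analogy with how Theorem \ref{th:trans_inv} is proved, exploiting the fact that rotation in Cartesian coordinates becomes pure translation in the angular log-polar coordinate.

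First I would make precise the effect of a rotation on the log-polar representation. Let $R_\alpha$ denote rotation by angle $\alpha$, so $h_\alpha(x,y) = h(x\cos\alpha + y\sin\alpha,\, -x\sin\alpha + y\cos\alpha)$. Plugging in the log-polar substitution $(x,y) = (e^r\cos t, e^r\sin t)$ and using the sum-of-angles identities, one gets
\begin{equation}
\tilde{h}_\alpha(r,t) = h_\alpha(e^r\cos t, e^r\sin t) = h(e^r\cos(t-\alpha),\, e^r\sin(t-\alpha)) = \tilde{h}(r, t-\alpha).
\end{equation}
So a rotation of the image is exactly a shift of the $t$-variable in its log-polar form; crucially, the radial variable $r$ is untouched.

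Next I would apply the definition of the Fourier-Mellin transform to $\tilde{h}_\alpha$ and perform a change of variables $t' = t - \alpha$ in the angular integral. Since $\tilde{h}(r,\cdot)$ is $2\pi$-periodic (because the original $(x,y)\mapsto h(x,y)$ is single-valued), the shifted integration interval $[-\alpha, 2\pi-\alpha]$ contributes the same as $[0, 2\pi]$. The radial integral is unaffected. Collecting the $e^{-ikt}$ factor under the substitution yields
\begin{equation}
\tilde{H}_\alpha(\omega, k) = \int_{-\infty}^{\infty}\int_0^{2\pi} \tilde{h}(r,t') e^{-i(k(t'+\alpha)+\omega r)}\,dt'\,dr = e^{-ik\alpha}\,\tilde{H}(\omega,k).
\end{equation}
Taking absolute values on both sides then gives $\tilde{A}_\alpha(\omega,k) = |\tilde{H}_\alpha(\omega,k)| = |\tilde{H}(\omega,k)| = \tilde{A}(\omega,k)$, which is the claimed rotational invariance; the rotation is absorbed entirely into the phase $\tilde{\psi}$. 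The hypotheses (nonnegativity, integrability, twice-differentiability) are used exactly as in Theorem \ref{th:trans_inv} to guarantee that the Fourier-Mellin integral converges absolutely and admits the clean decomposition into magnitude and phase.

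The main obstacle I foresee is bookkeeping around the angular domain: the theorem statement writes the radial integral over $(-\infty,\infty)$ and the angular one over $[0,2\pi]$, so one must justify the change of variables $t'=t-\alpha$ by appealing to periodicity of $\tilde{h}(r,\cdot)$ rather than naively shifting endpoints. A related subtlety is that the log-polar map is not a global diffeomorphism (the origin and the line $y=0,\,x<0$ need care), but since the Fourier-Mellin integral is insensitive to a measure-zero set this does not affect the computation. Once these technicalities are pinned down, the argument is essentially a one-line calculation: rotation in Cartesian equals $t$-translation in log-polar, and Fourier-magnitude is translation-invariant.
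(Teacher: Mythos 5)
Your proposal is correct. One point of comparison to make clear: the paper itself contains no proof of this statement at all --- Theorems \ref{th:trans_inv} and \ref{th:rot_inv} are imported from \cite{lin1993towards} and stated without argument, so there is no ``paper proof'' to match against. Your argument is the standard one (and essentially the one in the cited reference): rotation by $\alpha$ in Cartesian coordinates becomes the pure angular shift $\tilde{h}(r,t-\alpha)$ in log-polar coordinates, the shift produces only the phase factor $e^{-ik\alpha}$ in the Fourier--Mellin transform, and taking moduli kills that factor. The computation $\tilde{h}_\alpha(r,t)=\tilde{h}(r,t-\alpha)$ and the change of variables are both right, and you correctly flag the two technicalities (periodicity justifying the shift of the angular domain, and the measure-zero defect of the log-polar map). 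The only detail worth adding is that the domain-shift step needs the \emph{whole integrand} $\tilde{h}(r,t')e^{-ikt'}$ to be $2\pi$-periodic in $t'$, which requires $k\in\mathbb{Z}$; this is automatic in the Fourier--Mellin setting (the angular variable lives on a circle, so its conjugate frequency is a discrete harmonic index), but it should be said explicitly, since for non-integer $k$ the phase-factor identity $\tilde{H}_\alpha(\omega,k)=e^{-ik\alpha}\tilde{H}(\omega,k)$ fails and with it the invariance claim.
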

\begin{remark}
    We refer to invariants from Theorems \ref{th:trans_inv}-\ref{th:rot_inv} as to $\gamma_t$ and $\gamma_r$, respectively. 
\end{remark}

\subsection{Three Watermarks Instead of One}
Recall from Section \ref{sec:method} that the watermark embedding process in the pixel domain is done by optimizing the loss function $\mathcal{L}$ in the form from Eq. \ref{eq:TotalLoss}: $\mathcal{L} = \lambda_{wm}\mathcal{L}_{wm} + \lambda_{qual}\mathcal{L}_{qual},$ where 
\begin{equation}
    \mathcal{L}_{wm} = \sum_{i=1}^n \min((-1)^{w_i}(x_{a_i} - x_{b_i}) + \varepsilon, 0), \quad x = \mathcal{D}(z),
\end{equation}
$w(u_i) = \{w_1, \dots, w_n\}$ is the watermark assigned to user $u_i$ and $s(u_i)$ is the secret of user $u_i$. To ensure the robustness of the watermark to geometric transformations, we suggest embedding the watermark simultaneously in the pixel domain and in invariants $\gamma_t$ and $\gamma_r$. To do so, we optimize the loss function $\tilde{\mathcal{L}}$ in the form below:
\begin{align}
\label{eq:three_wms}
    & \tilde{\mathcal{L}} = \lambda_{wm}\mathcal{L}_{wm} + \lambda_{qual}\mathcal{L}_{qual} + \lambda_t \mathcal{L}_t + \lambda_r \mathcal{L}_r = \mathcal{L} + \lambda_t \mathcal{L}_t + \lambda_r \mathcal{L}_r.
\end{align}
In Eq. \ref{eq:three_wms}, $\lambda_t, \ \lambda_r$ are positive constants and 
$
\mathcal{L}_{p} = \sum_{i=1}^n \min((-1)^{w_i}(\gamma_p(x)_{a_i} - \gamma_p(x)_{b_i}) + \varepsilon, 0)$
is the loss function that controls the embedding of the watermark into invariants, $\gamma_t(x)$ for $p=t$ and $\gamma_r(x)$ for $p=r$, respectively.

\subsection{Extractions of Watermarks}

Given $m$ as the number of users,  the owner of the generative model extracts $3m$ watermarks from the given image $x$. Namely, given the secret $s(u_i)$ of the user $u_i$, three watermarks, $w(u_i|x), \ w(u_i|\gamma_r(x)), \ w(u_i|\gamma_t(x))$ are restored:
\begin{align}
    &\begin{cases}
    x_{a_j} \ge x_{b_j} \Longrightarrow w(u_i|x)_j = 0, \\
    x_{a_j} < x_{b_j} \Longrightarrow w(u_i|x)_j = 1,
    \end{cases} \\
    &\begin{cases}
          \gamma_p(x)_{a_j} \ge \gamma_p(x)_{b_j} \Longrightarrow w(u_i|\gamma_p(x))_j = 0, \\
          \gamma_p(x)_{a_j} < \gamma_p(x)_{b_j} \Longrightarrow w(u_i|\gamma_p(x))_j = 1, 
    \end{cases}
\end{align}
for $p =t$ and $p=r$ simultaneously. To assign the (possibly) watermarked image $x$ to the user, the owner of the model determines three candidates:
\begin{align}
   \begin{cases}
        u = \arg\min\limits_{u_i \in [u_1, \dots, u_m]: \xi_i=1} d(w(u_i), w(u_i|x)),\\
        u_p = \arg\min\limits_{u_i \in [u_1, \dots, u_m]: \xi^p_i=1} d(w(u_i), w(u_i|\gamma_p(x))), 
    \end{cases} 
\end{align}
where the variables 
\begin{equation}
    \xi^p_i = \mathds{1}[d(w(u_i|\gamma_p(x)), w(u_i)) \in [0,\tau_1] \cup [\tau_2, n]]
\end{equation}
indicate which of the users' watermarks are within an appropriate distance from the corresponding extracted watermarks, given the invariant $p$.

Finally, the owner of the model assigns the image $x$ to the user $\tilde{u}$ that corresponds to the minimum distance  among all $3m$  pairs of watermarks:
\begin{align}
    \tilde{u} = \arg\min & \{d(w(u), w(u|x)), d(w(u_r), w(u_r|\gamma_r(x))), d(w(u_t), w(u_t|\gamma_t(x)))\}. \nonumber
\end{align}

\subsection{Probability of Incorrect Attribution}
\label{sec:updated_fpr}

Assume that the user $u_i$ owns the watermarked image $x$. Note that the attribution of the image to the user $u_i$ is guaranteed to hold if $u=u_i, u_r=u_i, u_t=u_i.$ Hence, the probability of incorrect attribution, ${FPR}_3(m)$, is bounded from above by the sum 
    $\mathbb{P}(u \ne u_i) +  \mathbb{P}(u_r \ne u_i) + \mathbb{P}(u_t \ne u_i),$
yielding  $ FRP_3(m) \le 3\hat{p}$, where $\hat{p}$ is from Eq. \ref{eq:fpr_m}.

\section{Experiments}

\subsection{General Setup}
 We use \texttt{stable-diffusion-2-base}  model \cite{rombach2022high} with the \texttt{epsilon} prediction type and $50$ steps of denoising for the experiments. The resolution of generated images is $512 \times 512$. The experiments were conducted on \texttt{DiffusionDB} dataset \cite{wangDiffusionDBLargescalePrompt2022}. Specifically, we choose $1000$ unique prompts and generate $1000$ different images.  The public key for the user is sampled from the Bernoulli distribution with the parameter $p=0.5$. The length of a key is set to be $n=100$. The private key is generated by randomly picking $n$ unique pairs of indices of the flattened image.  


\subsection{Attack Details}\label{subseq:attack_details}

We evaluate the robustness of the watermarks embedded by our method against the following watermark removal attacks: brightness adjustment, contrast shift, gamma correction, image sharpening, hue adjustment, saturation adjustment, random additive noise, JPEG compression, and the white-box PGD adversarial attack  \cite{madry2018towards}. In this section, we describe these attacks in detail. 

\begin{itemize}
    \item Brightness adjustment of an image $x$ was performed by adding a constant value to each pixel: $x_{brightness} = x + b$, where $b$ was sampled from the uniform distribution $\mathcal{U}[-20, 20]$.
    
    \item Contrast shift was done in two ways: positive and negative. The positive contrast shift implies the multiplication of each pixel of an image by a constant positive factor: $x_{contrast} = c x$, where $c$ was sampled from the uniform distribution, $c \sim \mathcal{U}[0.5, 2]$. 

    \item On the contrary, when the contrast shift is performed with the negative value of $c$ (namely, $ c \sim \mathcal{U}[-2, -0.5]$), such a transform turns an image into a negative. Later, we treat these transforms separately and denote them as ``Contrast $+$'' and ``Contrast $-$'', depending on the sign of $c$. Note that ``Contrast $-$'' imitates a bit flipping attack.

    \item Gamma correction is nothing but taking the exponent of each pixel of the image: $x_{gamma} = x ^ g$, where $g \sim \mathcal{U}[0.5, 2]$.

    \item For sharpening, hue, and saturation adjustment, we use implementations from the \texttt{Kornia} package \cite{riba2020kornia} with the following parameters: $a_{saturation} = 2, \quad a_{hue} = 0.2,$  $\quad a_{sharpness} = 2.$

    \item The additive noise was sampled from the uniform distribution $\mathcal{U}[-\delta, \delta]$, where $\delta=25$. Note that the maximum $\|\cdot\|_\infty $ of noise is then equal to $25$.

    \item JPEG compression was performed using DiffJPEG \cite{Shin2017JPEGresistantAI} with quality equal to $50$. 

    \item White-box attack aims to change the embedded watermark $w$ to some other watermark $\hat{w}$ by optimizing the image with respect to the loss initially used to embed the watermark $w$:
\begin{align}
\label{eq:wb_attack_loss}
    &\mathcal{L}_{wb} = \lambda_{wm} \mathcal{L}_{wm} + \lambda_{qual} \mathcal{L}_{qual}, \quad \text{where} \\
    &\mathcal{L}_{wm} = \sum_{i=1}^{n}\min((-1)^{\hat{w}_i} (x_{a_i} - x_{b_i}) + \varepsilon, 0). \nonumber
\end{align}
In Eq. \ref{eq:wb_attack_loss}, the term $\mathcal{L}_{qual}$ corresponds to the difference in image quality in terms of the LPIPS metric, namely,
$   \mathcal{L}_{qual} = LPIPS(x, \hat{x})$
where $x$ and $\hat{x}$ are the original image and image on a particular optimization iteration, respectively.

 The loss function $\mathcal{L}_{wb}$ pushes the private key pixels to be aligned with a new randomly sampled public key $\tilde{w}$ so that the ground-truth watermark $w$ gets erased.  The attack's budget is the upper bound of $\|\cdot\|_\infty $ norm of the additive perturbation, which we have taken to be $ {\varepsilon}/{2}$ (see Eq. \ref{eq:wm_insert}). 
 
 If at some iteration the distance between the source image $x$ and the attacked one $\hat{x}$ exceeds $ {\varepsilon}/{2}$, then we project $\hat{x}$ back onto the sphere $\|\hat{x} - x\|_\infty = {\varepsilon}/{2}$. The optimization is performed for $10$ iterations with the Adam optimizer and the learning rate of $10^{-1}$. Note that this attack setting implies knowledge about the private key and assumes white-box access to the generative model. Hence, this is de facto the strongest watermark removal attack we consider. 
\end{itemize}
Pixels of the images perturbed by the attacks are then linearly mapped to $[0, 255]$ segment.

\subsection{Results: Pixel Level Watermarking}
\label{sec:main_results}
We report (i) bit-wise error of the watermark extraction caused by watermark removal attacks and (ii) True Positive Rates in attribution and detection problems. We compare our results to those of Stable Signature \cite{fernandez2023stable}, SSL watermarking \cite{fernandez2022watermarking}, AquaLora \cite{feng2024aqualora}, and WOUAF \cite{kim2024wouaf}, one of the state-of-the-art image watermarking approaches. In these works, the watermark length is set to be $48$, $30$, $48$, and $32$, respectively, while we have $100$ bits long watermarks; note that our method allows embedding of the longest  watermarks, among the considered competitors. A qualitative comparison of original and watermarked images can be found in the supplementary material. To evaluate the robustness of the watermarks against removal attacks, we report an average bit-wise error, ABWE:
\begin{equation}
    ABWE = \frac{1}{Nn}\sum\limits_{i=1}^{N}\sum\limits_{j=1}^{n}\mathds{1}[w^{gt}_{i,j}\neq w^{extracted}_{i,j}],
\end{equation}
where $w^{gt}_{i,j}$ and $ w^{extracted}_{i,j}$ are the $j$-th bits of ground truth and extracted watermarks, corresponding to the $i$-th image. Here, $n$ is the watermark length and $N=1000$ is the number of images. We report ABWE in Table \ref{table:attack-errors}.

\begin{table*}[!htb]
\begin{center}
\caption{Average bit-wise error after watermark removal attacks. The column ``Error'' corresponds to the average bit-wise error of the watermarking process.}
\begin{tabular}{cccccccccc}
\toprule
\multicolumn{1}{c}{Method}
&\multicolumn{1}{c}{Error}
&\multicolumn{1}{c}{Brightness}
&\multicolumn{1}{c}{Contrast $+$}
&\multicolumn{1}{c}{Contrast $-$}
&\multicolumn{1}{c}{Gamma}
&\multicolumn{1}{c}{JPEG}
\\ \midrule 
Ours   &$0.001$  &$\textbf{0.002}$    &$\textbf{0.002}$ &$\textbf{0.002}$ &$\textbf{0.003}$ &$0.147$ \\
Stable signature  &$0.012$    &$0.025$ &$0.025$ & $0.518$ & $0.016$ &$0.167$ \\
SSL  & $\textbf{0.000}$ & $0.117$ & $0.071$ & $0.785$& $0.041$ &${0.205}$ \\

AquaLora&  $ 0.043$ & $0.043$ & $0.043$ & $0.544$& $0.049$ &${0.056}$ \\
WOUAF & $0.031$ & $0.005$ & $0.008$ & $0.317$ & $0.004$ & $\textbf{0.014}$ \\

\midrule 
\multicolumn{1}{c}{Method}
&\multicolumn{1}{c}{Hue}
&\multicolumn{1}{c}{Saturation}
&\multicolumn{1}{c}{Sharpness}
&\multicolumn{1}{c}{Noise}
&\multicolumn{1}{c}{PGD}
\\ \midrule 
Ours     &$\textbf{0.010}$    &$0.110$ &$\textbf{0.001}$ &${0.056}$ &${0.064}$ & \\
Stable signature & $0.014$   &$\textbf{0.014}$ &$0.010$ & $0.136$ & $0.487$ & \\
SSL  & $0.061$ &$0.198$ &$0.003$ &$0.254$ &$0.272$ &  \\

AquaLora& $0.047$ & $0.059$ & $0.049$ & $0.073$  & $\textbf{0.052}$ \\
WOUAF & $0.010$ & $0.041$ & $0.005$ & $\textbf{0.005}$ & $0.132$ \\

\bottomrule
\end{tabular}

\label{table:attack-errors}
\end{center}
\end{table*}

\begin{table*}[!htb]
\begin{center}
\caption{TPRs under different watermark removal attacks, attribution problem.}
\begin{tabular}{ccccccccccc}
\toprule
\multicolumn{1}{c}{Method}
&\multicolumn{1}{c}{Hit}
&\multicolumn{1}{c}{Brightness}
&\multicolumn{1}{c}{Contrast $+$}
&\multicolumn{1}{c}{Contrast $-$}
&\multicolumn{1}{c}{Gamma}
&\multicolumn{1}{c}{JPEG}
\\ \midrule 
Ours &$\textbf{1.000}$   &$\textbf{1.000}$ &$\textbf{1.000}$ &$\textbf{1.000}$ &$\textbf{1.000}$ &$0.452$ \\
Stable signature &$0.983$   &$0.977$ &$0.976$ & $0.000$ & $0.982$ &$0.576$ \\
SSL & $1.000$ & $0.859$ & $0.905$ & $0.666$ & $0.996$ & $0.673$ \\
AquaLora &$0.955$   & $0.955$ & $0.953$ & $0.178$ & $0.940$ & ${0.936}$ \\
WOUAF &${0.999}$   &$0.997$ & $0.993$ & $0.002$ & $0.995$ &$\textbf{0.967}$ \\
 
 \midrule 
\multicolumn{1}{c}{Method}
&\multicolumn{1}{c}{Hue}
&\multicolumn{1}{c}{Saturation}
&\multicolumn{1}{c}{Sharpness}
&\multicolumn{1}{c}{Noise}
&\multicolumn{1}{c}{PGD}
\\ \midrule 
Ours &$\textbf{1.000}$    &${0.653}$ &$\textbf{1.000}$ &${0.971}$ &$\textbf{0.993}$ & \\
Stable signature & $0.982$   &$\textbf{0.982}$ & $0.983$ &$0.675$ &$0.000$ & \\
SSL & $0.987$ & $0.625$ & $0.996$ & $0.490$ & $0.449$\\
AquaLora & $0.949$   & ${0.914}$ & $0.941$ & $0.908$ & $0.947$ &  \\
WOUAF & $0.999$  & $0.862$ &$0.997$ &$\textbf{0.974}$ & $0.197$ & \\
\bottomrule
\end{tabular}

\label{table:collision}
\end{center}
\end{table*}

\begin{table*}[!htb]
\begin{center}
\caption{TPRs under different watermark removal attacks, detection problem.}
\begin{tabular}{ccccccccc}
\toprule
\multicolumn{1}{c}{Method}
&\multicolumn{1}{c}{Hit}
&\multicolumn{1}{c}{Brightness}
&\multicolumn{1}{c}{Contrast $+$}
&\multicolumn{1}{c}{Contrast $-$}
&\multicolumn{1}{c}{Gamma}
&\multicolumn{1}{c}{JPEG}
\\ \midrule 
Ours &$\textbf{1.000}$    &$\textbf{1.000}$ &$\textbf{1.000}$ &$\textbf{1.000}$ &$\textbf{1.000}$ &$0.452$ \\
Stable signature &${0.983}$   &$0.977$ &$0.976$ & $0.000$ & $0.982$ &$0.576$ \\
SSL  &$1.000$    &$0.863$ &$0.906$ & $0.674$ & $0.996$ &$ {0.681}$ \\
AquaLora &$0.955$   & $0.955$ & $0.953$ & $0.178$ & $0.940$ & ${0.936}$ \\
WOUAF &${0.999}$   &$0.997$ & $0.993$ & $0.002$ & $0.995$ &$\textbf{0.967}$ \\
\midrule 
\multicolumn{1}{c}{Method}
&\multicolumn{1}{c}{Hue}
&\multicolumn{1}{c}{Saturation}
&\multicolumn{1}{c}{Sharpness}
&\multicolumn{1}{c}{Noise}
&\multicolumn{1}{c}{PGD}
\\ \midrule 
Ours &$\textbf{1.000}$    &${0.653}$ &$\textbf{1.000}$ & ${0.971}$ &$\textbf{0.993}$ \\
Stable signature & $0.982$    & $ \textbf{0.982}$ & $0.993$  &$0.675$ &$0.000$ \\
SSL  &${0.987}$    & $0.628$ & $0.996$ & $0.495$ & $0.459$ \\
AquaLora & $0.949$   & ${0.914}$ & $0.941$ & $0.908$ & $0.947$ &  \\
WOUAF & $0.999$  & $0.862$ &$0.997$ &$\textbf{0.974}$ & $0.197$ & \\
\bottomrule
\end{tabular}

\label{table:orig-img-error}
\end{center}
\end{table*}

\begin{table*}[!htb]
\caption{TPRs under different watermark removal attacks, attribution problem.  }
\label{table:attribution_3wms}
\begin{center}
\begin{tabular}{ccccccccccc}
\toprule
\multicolumn{1}{c}{Method}
&\multicolumn{1}{c}{Hit}
&\multicolumn{1}{c}{Brightness}
&\multicolumn{1}{c}{Contrast $+$}
&\multicolumn{1}{c}{Contrast $-$}
&\multicolumn{1}{c}{Gamma}
&\multicolumn{1}{c}{JPEG}
&\multicolumn{1}{c}{Translation}
\\ \midrule 
STA(1) &${0.972}$   &${0.928}$ &${0.928}$ &${0.928}$ &${0.928}$ &$\textbf{0.724}$ & $0.000$\\
STA(3) &$\textbf{0.980}$   &$\textbf{1.000}$ & $\textbf{1.000}$ & $\textbf{1.000}$ &$\textbf{0.951}$ & ${0.563}$  & $\textbf{0.962}$\\
 
 \midrule 
\multicolumn{1}{c}{Method}
&\multicolumn{1}{c}{Hue}
&\multicolumn{1}{c}{Saturation}
&\multicolumn{1}{c}{Sharpness}
&\multicolumn{1}{c}{Noise}
&\multicolumn{1}{c}{PGD}
&\multicolumn{1}{c}{Rotation}
\\ \midrule 
STA(1) &$\textbf{1.000}$    &${0.571}$ &$\textbf{1.000}$ &$\textbf{0.928}$ &$\textbf{0.857}$ & ${0.000}$\\
STA(3) & $\textbf{1.000}$ & $\textbf{1.000} $   &$\textbf{1.000}$ & \textbf{0.928} &$\textbf{0.857}$ &$\textbf{0.489}$ & \\
\bottomrule
\end{tabular}
\end{center}
\end{table*}

\begin{table*}[!htb]
\caption{TPRs under different watermark removal attacks, detection problem. }
\label{table:detection_3wm}
\begin{center}
\begin{tabular}{cccccccccc}
\toprule
\multicolumn{1}{c}{Method}
&\multicolumn{1}{c}{Hit}
&\multicolumn{1}{c}{Brightness}
&\multicolumn{1}{c}{Contrast $+$}
&\multicolumn{1}{c}{Contrast $-$}
&\multicolumn{1}{c}{Gamma}
&\multicolumn{1}{c}{JPEG}
&\multicolumn{1}{c}{Translation}
\\ \midrule 
STA(1) &${0.972}$   &${0.928}$ &${0.928}$ &${0.928}$ &${0.928}$ &$\textbf{0.724}$ & $0.000$\\
STA(3) &$\textbf{0.980}$   &$\textbf{1.000}$ & $\textbf{1.000}$ & $\textbf{1.000}$ &$\textbf{0.951}$ & ${0.563}$  & $\textbf{0.962}$\\
 
 \midrule 
\multicolumn{1}{c}{Method}
&\multicolumn{1}{c}{Hue}
&\multicolumn{1}{c}{Saturation}
&\multicolumn{1}{c}{Sharpness}
&\multicolumn{1}{c}{Noise}
&\multicolumn{1}{c}{PGD}
&\multicolumn{1}{c}{Rotation}
\\ \midrule 
STA(1) &$\textbf{1.000}$    &${0.571}$ &$\textbf{1.000}$ &$\textbf{0.928}$ &$\textbf{0.857}$ & ${0.000}$\\
STA(3) & $\textbf{1.000}$ & $\textbf{1.000} $   &$\textbf{1.000}$ & \textbf{0.928} &$\textbf{0.857}$ &$\textbf{0.489}$ & \\
\bottomrule
\end{tabular}
\end{center}
\end{table*}



To estimate the TPR in the attribution problem, we extract $m=10000$ different watermarks from the watermarked images. To extract a different watermark, we randomly generate $m=10000$ different private keys to simulate other users. The results are reported in Table \ref{table:collision} together with the TPRs under different watermark removal attacks. Note that the PGD attack in this setting is aimed at restoring the original image by removing the watermark.  To estimate the TRP in the watermark detection problem, we do the same procedure for non-watermarked images generated by the Stable Diffusion model and extract $m=10000$ different watermarks. We fix FPR = $10^{-6}$; such a FPR is achieved when $\tau_1 = 25$ and $\tau_2 = 75$ from Eq. \ref{eq:indicator}. The results are presented in Table \ref{table:orig-img-error}. 

Note that our framework yields both low misattribution and misdetection rates according to the two-tailed detection and attribution rules from the Eq. \ref{eq:attribution}. The proposed approach yields watermarks that are provably robust to additive perturbations of a bounded magnitude, multiplicative perturbations of any kind, and exponentiation.




\subsection{Results: Several Watermarks Instead of One}

In Tables \ref{table:attribution_3wms}-\ref{table:detection_3wm}, we include rotation and translation transformations of the image. 
To rotate an image, we sample an angle of rotation $\theta_r$ uniformly from $[-10^{\circ},10^{\circ}]$;  to translate an image, we sample the vector of translation $(\theta^x_t, \theta^y_t)$ uniformly from the set $[-10^{\circ}, 10^{\circ}] \times [-10^{\circ},10^{\circ}]$. We compare the results of an extended version of the proposed method with the baseline approach. STA(1) refers to the baseline approach described in the manuscript, STA(3) refers to an extended approach. We use $m=10000$ different private keys and fix FPR = $10^{-6}$. Note that to achieve FPR  $= 10^{-6}$ for STA(3), we set $\tau_1=24$ and $\tau_2=76$. In the column ``Hit'', we report the bit-wise accuracy of the watermarking procedure. To focus on the evaluation of the robustness of the watermarks, we report results for the images in which the watermark is embedded successfully. Namely, after embedding the watermark, we choose the watermarked images with a generation error smaller than $0.05$. It is noteworthy that the simultaneous embedding of the watermark into the pixel domain and corresponding invariants in the frequency domain significantly improves the robustness against  watermark removal attacks. 

\section{Conclusion}
In this paper, we propose \emph{Spread them Apart}, a framework to watermark the generated content of continuous nature and apply it to images generated by Stable Diffusion. We show theoretically that the watermarks produced by our method are provably robust against additive watermark removal attacks of a bounded norm and are provably robust to multiplicative perturbations by design. Our approach can be used to both detect that the image is generated by a given model and to identify the end-user who generated it. Experimentally, we show that our method is comparable to the state-of-the-art watermarking methods in terms of the robustness to synthetic watermark removal attacks.

\bibliographystyle{splncs04}
\bibliography{main}
\end{document}